\crefname{figure}{Fig.}{Fig.}
\crefname{table}{Table}{Table}
\crefname{wrapfigure}{Fig.}{Fig.}
\crefname{theorem}{Theorem}{Theorems}
\crefname{section}{Section}{Sections}
\crefname{hypothesis}{Hypothesis}{Hypotheses}
\crefname{proposition}{Proposition}{Propositions}
\crefname{algorithm}{Algorithm}{Algorithm}
\crefname{definition}{Definition}{Definition}
\crefname{corollary}{Corollary}{Corollary}
\begin{document}
\mainmatter              
%
\title{Measure Preserving Flows for Ergodic Search \\
in Convoluted Environments}
\titlerunning{Measure Preserving Flows for Ergodic Search in Convoluted Environments}  
%
\author{Albert Xu$^\dagger$, Bhaskar Vundurthy$^\dagger$, Geordan Gutow$^\dagger$, Ian Abraham$^*$, \\
Jeff Schneider$^\dagger$, and Howie Choset$^\dagger$}
\authorrunning{Xu \textit{et. al.}} 
%
\tocauthor{}
\institute{$^\dagger$ Carnegie Mellon University, Pittsburgh PA 15232, USA\\
$^*$Yale University, New Haven CT 06511, USA
\footnote[0]{This research was supported in part by an Intelligence Community Postdoctoral Research Fellowship at Carnegie Mellon University, administered by Oak Ridge Institute for Science and Education through an interagency agreement between the U.S. Department of Energy and the Office of the Director of National Intelligence.
}
}
\maketitle              
\begin{abstract} Autonomous robotic search has important applications in robotics, such as the search for signs of life after a disaster. When \emph{a priori} information is available, for example in the form of a distribution, a planner can use that distribution to guide the search.
Ergodic search is one method that uses the information distribution to generate a trajectory that minimizes the ergodic metric, in that it encourages
the robot to spend more time in regions with high information and proportionally less time in the remaining regions.
Unfortunately, prior works in ergodic search do not perform well in complex environments with obstacles such as a building's interior or a maze. To address this, our work presents a modified ergodic metric using the Laplace-Beltrami eigenfunctions to capture map geometry and obstacle locations within the ergodic metric. Further, we introduce an approach to generate trajectories that minimize the ergodic metric while guaranteeing obstacle avoidance using measure-preserving vector fields.
Finally, we leverage the divergence-free nature of these vector fields to generate collision-free trajectories for multiple agents. 
We demonstrate our approach via simulations with single and multi-agent systems on maps representing interior hallways and long corridors with non-uniform information distribution.
In particular, we illustrate the generation of feasible trajectories in complex environments where prior methods fail. 

\keywords{Multi-agent search, Ergodic search, Search with obstacles, Measure-preserving flows}
\end{abstract}

\section{Introduction}

Robotic exploration has multiple applications including search and rescue missions \cite{murphysearch}, autonomous data gathering/monitoring \cite{dunbabin2012robots}, and surveillance/patrolling \cite{patel2021multi}.
Such exploration problems typically entail coverage path planning \cite{choset2001review,galceran2013review,acar2002morse} where the robot determines a trajectory that visits every point in a given space. 
When \emph{a priori} information is available, not every point needs to be visited, and it is beneficial to focus the robot's exploration on higher information regions.
In this context, ergodic search \cite{mathew2011metrics} is a promising effort to aid the robots to spend more time in higher information regions and proportionally less in low information regions. Unlike uniform coverage approaches exemplified by the lawn mower pattern \cite{mezic2011lawnmover}, ergodic search approaches naturally circle around ``hot spots'' of information, for instance the survivor gathering locations in a search and rescue mission \cite{murphysearch}.

However, virtually all prior efforts with ergodic search presume no or small obstacles in the search region \cite{mathew2011metrics,murphey2013trajopt,miller2015ergodic,ren2022local}.
Efforts towards obstacle avoidance utilize stochastic trajectory optimization \cite{ayvali2017ergodic} to discard candidate trajectories that intersect with obstacles, or place repulsive vector fields around the obstacles \cite{patel2021multi,salman2017multi} to push the robots away.
Alternatively, the authors in \cite{lerch2023safety} use control barrier functions to serve as additional hard constraints for trajectory optimization. While such algorithms do avoid obstacles, they either struggle to converge to a feasible trajectory in complex environments or are unable to navigate around larger non-convex obstacles, like in the environment shown in \cref{fig:atrium_eg_intro}.
In this work, we focus on generating feasible ergodic trajectories for uniform and non-uniform information maps in complex environments for multiple robots, as illustrated in orange, green, and purple in \cref{fig:atrium_eg_intro}.

\begin{wrapfigure}{r}{0.5\textwidth}
    \centering
    \vspace{-0.3in}
    \includegraphics[width=\linewidth]{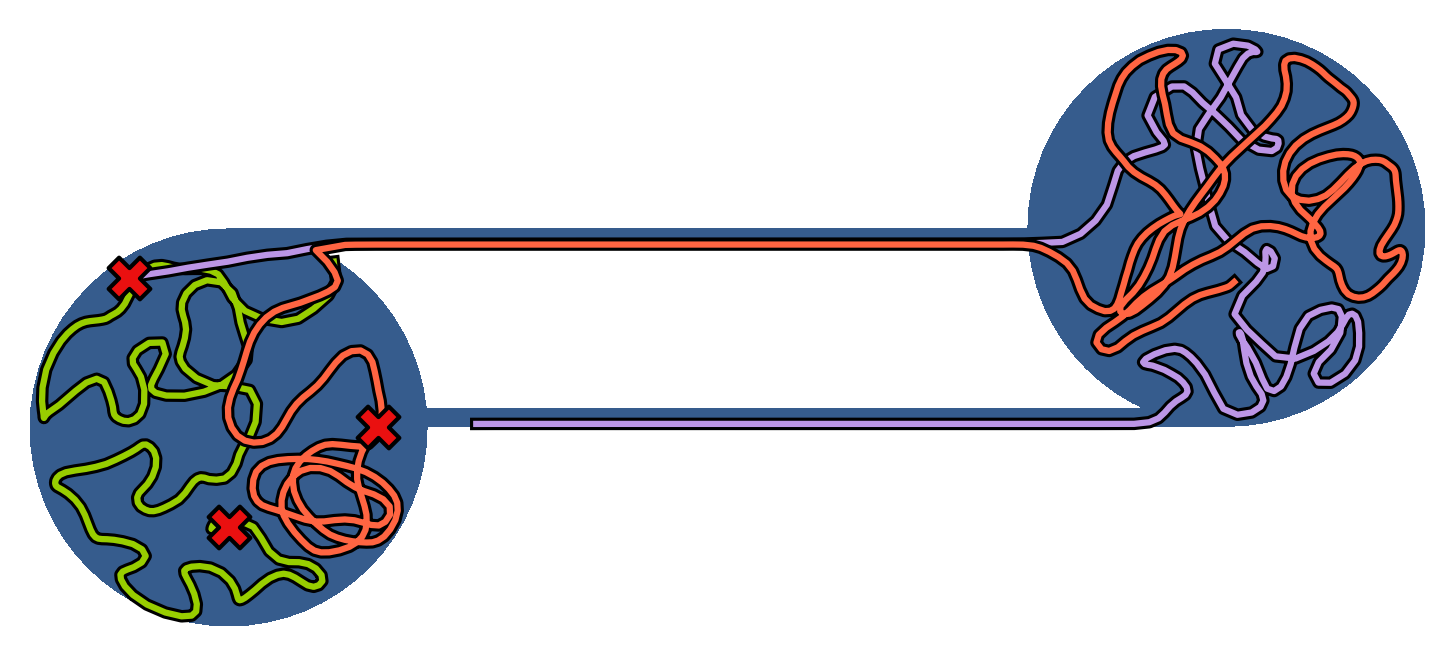}
    \caption{An interior environment with two rooms connected by a narrow hallway. An ergodicity-minimizing trajectory generated using measure-preserving vector fields translates the three robots, starting from the red X's, to the other room through the narrow hallway for efficient information gathering.}
    \vspace{-0.2in}
    \label{fig:atrium_eg_intro}
\end{wrapfigure}

To this end, we first compute measure-preserving vector fields for a given environment with obstacles. Assuming point-sized robots with first-order dynamics, we trace out trajectories by having these point robots flow along the measure-preserving vector fields. Using theorems from ergodic theory, we prove that as time $t\to\infty$, these trajectories are ergodic; the time-averaged statistics of the robot trajectory match the probability distribution of information on the map. Finite-time trajectories can then be generated by seeding a randomly generated fixed-length trajectory and minimizing its ergodicity using gradient-based methods. 

Further, we note that the ergodic metric in prior works relies heavily on the Fourier transforms that are well-defined for Euclidean spaces. While the presence of a few scattered obstacles does not significantly impact the distance metric on Euclidean spaces, more complex environments (see \cref{fig:atrium_eg_intro}) with narrow corridors and sharp turns significantly warp the distance metric, rendering the space non-Euclidean. Given this work primarily focuses on such complex environments, we turn to the Laplace-Beltrami eigenfunctions \cite{levy2006laplace,reuter2009discrete,vallet2008spectral}, which are considered to be the natural generalization of the Fourier basis functions for arbitrary manifolds. These functions naturally encode the information of the free space without the obstacles when generating measure-preserving vector fields. Consequently, the resulting trajectories are always collision-free.
Finally, an added advantage of using measure-preserving ergodic flows is that they are always divergence-free. This allows us to generalize our work to multiple agents without any additional effort and generate collision-free trajectories for all involved agents.

In summary, our contributions are as follows: 
\vspace{-0.05in}
\begin{enumerate}
    \item A formal proof that a system following the dynamics of a time-varying measure-preserving flow will be ergodic in the infinite time limit, 
    \item An approach to generate finite-time ergodic trajectories in the presence of obstacles using measure-preserving vector fields, 
    \item An amendment to the ergodic metric to handle complex environments and generate collision-free trajectories that minimize the ergodic metric, and 
    \item A generalization to multi-agent systems to generate joint trajectories that naturally avoid inter-agent collisions. 
\end{enumerate}

The rest of the paper is organized as follows. \cref{sec:litreview} summarizes prior ergodic search techniques that deal with obstacle and collision avoidance. \cref{sec:theory} introduces the theory and proofs justifying the use of measure-preserving vector fields as well as the proposed Laplace-Beltrami eigenfunctions. \cref{sec:results} shows experimental results of simulated search trajectories on test maps, and comparisons against other methods. Lastly, \cref{sec:conclusion} concludes the paper and poses a few future work options.

\section{Literature Review} \label{sec:litreview}

\subsection{Ergodic search amidst obstacles}
\label{sec:litreview_ergodic}

In the context of non-uniform information maps, ergodic search-based methods have proven to be quite effective in exploring a space\cite{patel2021multi,choset2001review,murphey2013trajopt}. They showcase an ability to balance exploration and exploitation, where the former attempts to visit all possible locations for new information while the latter searches high-information regions myopically \cite{murphysearch,mezic2011lawnmover,ren2023MOES}. However, historically, ergodic search in the presence of obstacles has been a challenge. 

One of the first methods that tackle obstacles places repulsive vector fields around idealized circular obstacles \cite{salman2017multi}. The repulsive field strength is scaled with the distance from the robot to the obstacle, guaranteeing that the agent will never collide with obstacles as long as the integration time step is sufficiently fine. However, the repulsive fields are only defined for circular obstacles, making it difficult to conduct a comparison study against other methods for more complex environments. 

Alternatively, Ayvali {\em et. al.} \cite{ayvali2017ergodic} use stochastic trajectory optimization for ergodic coverage (STOEC) to compute coverage trajectories, which lends itself naturally to obstacle avoidance. During the stochastic sampling step, rejecting all the samples that lead to an obstacle intersection is an easy way to guarantee that the chosen trajectory will be collision-free. However, the drawback of this method is implicit. To ensure good ergodic coverage, the duration of sampled trajectories must be long enough. However, the probability of randomly sampling a valid trajectory in a convoluted environment vanishes exponentially as the time horizon increases. In practice, STOEC struggles to find paths around large obstacles or maze-like environments. 

Lerch et al. \cite{lerch2023safety} propose a control barrier function to prevent the trajectory from entering the obstacle volumes, then use gradient-based optimization methods to numerically minimize the ergodic metric with respect to the trajectory and control inputs. Unfortunately, this method also struggles with convoluted maps since the computed trajectories get stuck behind large obstacles or inside small rooms.

\subsection{Laplace-Beltrami basis functions for non-Euclidean spaces}
\label{sec:litreview_laplace}
The Fourier basis functions are commonly used for spectral analysis in $\mathbb R^n$. They appear in the ergodic metric to help us compare the spectral coefficients of the information distribution and the time-averaged robot statistics. However, it is well known that they are not appropriate when the underlying space is not square \cite{byerly1893elemenatary,macrobert1967spherical}, as evidenced by the necessity of Laplace's spherical harmonic functions. The most general solution for spectral analysis on an arbitrary manifold uses the eigenfunctions of the Laplace-Beltrami operator. It is used in computational geometry to summarize functions on meshes \cite{levy2006laplace,solomon2014laplace}, especially when the space has curvature or complicated boundaries. In a similar vein, we utilize it to summarize the spatial and temporal statistics of our information distribution in ergodic search, especially in the presence of complex obstacles. 

\section{Theory} \label{sec:theory}

We begin our discussion on ergodic theory with a compact metric space $X$, which for this paper we assume is a subset of $\mathbb R^n$ equipped with the Borel $\sigma$-algebra $\mathscr B$. In addition, we have some known information distribution represented by a probability measure $\mu:\mathscr B\to \mathbb R$. Together these objects form a measure space $(X,\mathscr B,\mu)$.
We study the behavior of a robot on this space by treating the robot as a point $x\in X$ and its time evolution as a function $T^t: (X,\mathbb R)\to X$; the robot's configuration space is $X$ and a robot starting at $x$ will end up at $T^t(x)$ after $t$ seconds. Thus the ergodic criterion is the convergence of the robot's trajectory statistics to the desired distribution, stated in (\ref{eq:erg-criterion}).
\begin{equation} \label{eq:erg-criterion}
    \frac{1}{\tau}\int_0^\tau \delta_{[T^t(x)]}dt \to \mu
\end{equation}

In this section, we detail the generation of ergodic trajectories by first studying measure preserving flows, an essential component of ergodic systems. Then we prove that following the flows results in an ergodic trajectory by showing that they are uniquely ergodic. Lastly, we discuss ergodic search on a fixed time horizon, using the randomly generated trajectory as an initial guess and minimizing the ergodic metric using gradient-based methods. 

\subsection{Measure-preserving flows}
Fundamental to ergodic theory is the concept of \textit{measure-preserving flows}, as measure-preserving is a prerequisite for ergodicity \cite{einsiedler2011ergodic}. Thus it is useful to search for an ergodic flow within the space of all measure-preserving flows. Flows (\cref{defn:flow}) are functions that simulate motion along a vector field, whether that field is time varying or not. A flow is called measure-preserving (\cref{defn:measure-preserving}) with respect to a measure space $(X, \mathscr B, \mu)$ if it leaves the measure unchanged.

\begin{definition}[Flow] \label{defn:flow}
    We define a \textbf{time-invariant flow} as a family of continuous functions $T^t:X\to X$ such that for any $s,t\in\mathbb R$, $T^t(T^s(x)) = T^{t+s}(x)$. Each flow is related to a vector field over $X$ by the exponential map; $T=\exp(v)$ and $v=\log(T)$. Equivalently, the time-invariant flow can be defined as the flow along its vector field $v(x)$.

    On the other hand, a \textbf{time-varying flow} is defined as the flow along a time-varying vector field $v(x,t)$. 
\end{definition}

We note that our time parameter $t\in\mathbb R$ is a real number, thus admitting a negative time flow. For this paper, we assume our flows are invertible, and thus $T^{-t}$ is the inverse map of $T^t$.

\begin{definition}[Measure-preserving]\label{defn:measure-preserving}
    Given a measure space $(X,\mathscr B,\mu)$ and a flow $T^t:X\to X$, $T$ is \textbf{measure-preserving} on $\mu$ if for all subsets $A\subseteq X$, $\mu(A)=\mu(T^{-t}A)$.

    The converse is referred to as \textbf{$T$-invariance}, where the same property indicates $\mu$ is $T$-invariant.
\end{definition}

To describe the space of measure-preserving flows, we can look at the differential change in measure that would result from moving a point in $X$ along a vector field $v$. It turns out the space of all measure-preserving flows can be identified with the solutions to a partial differential equation (\cref{thm:measure-preserving-divergence}).
\begin{theorem}[Measure-preserving flows]\label{thm:measure-preserving-divergence}
    Let $(X,\mathscr B,\mu)$ be a measure space (we assume $X\subseteq \mathbb R^n$). Let $p(x)$ be a continuous and differentiable probability density function associated with $\mu$ such that $\mu(A)=\int_A p(x)dx$.

    The set of all measure-preserving flows on this measure space can be described through their associated vector fields $T = \exp(\vec v)$. Further, these vector fields form a linear subspace described by the following partial differential equation.
    \begin{equation} \label{eq:vf-solutions}
      \nabla\cdot (p(x) \vec{v}(x)) = 0
    \end{equation}
\end{theorem}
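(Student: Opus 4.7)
The plan is to characterize measure-preserving flows by differentiating the preservation identity $\mu(A) = \mu(T^{-t}A)$ at $t = 0$ and showing that the resulting pointwise condition is exactly $\nabla \cdot (p \vec v) = 0$. Writing $\mu(T^{-t}A) = \int_{T^{-t}A} p(x)\,dx$ and applying the change of variables $x = T^{-t}(y)$ gives
\begin{equation}
\mu(T^{-t}A) = \int_A p\bigl(T^{-t}(y)\bigr)\,\bigl|\det DT^{-t}(y)\bigr|\,dy.
\end{equation}
I would then Taylor-expand both factors in $t$ around $t=0$ using $\partial_t T^{-t}(y)\big|_{t=0} = -\vec v(y)$ and Jacobi's formula (the infinitesimal Liouville identity) $|\det DT^{-t}(y)| = 1 - t\,\nabla\cdot\vec v(y) + O(t^2)$, together with $p(T^{-t}(y)) = p(y) - t\,\vec v(y)\cdot\nabla p(y) + O(t^2)$. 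Multiplying gives the integrand $p(y) - t\,\nabla\cdot(p(y)\vec v(y)) + O(t^2)$, and therefore
\begin{equation}
\left.\tfrac{d}{dt}\mu(T^{-t}A)\right|_{t=0} = -\int_A \nabla\cdot\bigl(p(y)\vec v(y)\bigr)\,dy.
\end{equation}

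Next I would use the flow property $T^{-(t+s)} = T^{-s}\circ T^{-t}$ (for a time-invariant flow) to bootstrap the $t=0$ derivative to all $t$: the same calculation performed at an arbitrary time $t$ gives the same right-hand side because $\vec v$ does not depend on $t$. Hence $\nabla\cdot(p\vec v)=0$ pointwise implies $\mu(T^{-t}A)$ is constant in $t$, giving measure-preservation. For the converse, measure-preservation forces $\int_A \nabla\cdot(p\vec v)\,dy = 0$ for every measurable $A\subseteq X$; since the integrand is continuous (by the hypothesis that $p$ is $C^1$ and $\vec v$ is smooth enough for the flow to exist), the Lebesgue differentiation theorem yields $\nabla\cdot(p\vec v)\equiv 0$. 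The linear subspace claim is then immediate: the operator $\vec v \mapsto \nabla\cdot(p\vec v)$ is linear in $\vec v$, so its kernel is a linear subspace of vector fields.

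The main obstacle will be the rigorous justification of the infinitesimal Jacobian expansion via Liouville's formula, and the uniformity of the $O(t^2)$ remainders in $y\in A$ needed to exchange differentiation and integration. A secondary subtlety is the boundary of $X$: for the flow to remain in $X$, $\vec v$ must be tangent to $\partial X$, so the characterization implicitly lives within the subspace of vector fields satisfying that boundary condition, and the divergence theorem step in the converse direction goes through without boundary terms only under this assumption. Once these analytical points are handled, the time-varying extension mentioned in \cref{defn:flow} follows by applying the same argument at each frozen time $t$, since $p$ has no explicit time dependence and the instantaneous condition $\nabla\cdot(p\,\vec v(\cdot,t))=0$ must hold for every $t$.
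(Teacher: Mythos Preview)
Your proposal is correct and follows essentially the same approach as the paper: both differentiate the preservation identity $\mu(A)=\mu(T^{-t}A)$ at $t=0$, expand the transported density and the infinitesimal volume change to first order, and read off $\nabla\cdot(p\vec v)=0$. The only cosmetic difference is that the paper localizes first by restricting to $\delta$-balls $B_\delta(x)$ and computes the volume change via $\det(I-\nabla v\,dt)$, whereas you keep a general $A$, invoke the change-of-variables formula with Liouville's Jacobian expansion, and localize at the end via the Lebesgue differentiation theorem; your packaging is slightly more standard and your remarks on boundary tangency and remainder uniformity add rigor the paper's heuristic argument omits.
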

\begin{proof}
    Proof in \cref{app:pf-thm1}
\end{proof}

As the solutions to (\ref{eq:vf-solutions}) form a linear subspace, we can approximate that subspace to a certain extent using a finite basis of vector fields, all of which satisfy the measure-preserving criterion. Our paper focuses primarily on an application to a 2D setting, and one method for finding measure-preserving vector fields on 2D spaces is to use the 2D curl. First, we choose a set of basis functions $\{u_i(x)\}$ as the eigenfunctions of the Laplacian, subject to a Dirichlet boundary condition $u(x) = 0$ for all $x\in\partial X$.
\begin{equation}
    \nabla^2 u_i = \lambda_i u_i
\end{equation}

Then we can construct vector fields using the 2D curl, where $x$ and $y$ denote the two dimensions. We can see that the vector fields in equation (\ref{eq:measure-preserving-laplacian-eigfuncs}) always satisfy (\ref{eq:vf-solutions}), and on the boundary $\partial X$, $v$ will always point parallel to the boundary.
\begin{equation}\label{eq:measure-preserving-laplacian-eigfuncs}
    v_i(x, y) = \left[-\frac{1}{p(x, y)} \frac{\partial u_i(x, y)}{\partial y}, \frac{1}{p(x,y)}\frac{\partial u_i(x, y)}{\partial x}\right]
\end{equation}

We note that following these basis flows, or any linear combination of them, will result in limit cycles and will \textit{not} yield ergodic trajectories. This effect is visualized in \cref{fig:measure-preserving-flows-example} with the traced trajectory (solid and faded) for each respective flow field. Measure-preserving is a necessary condition for ergodicity, but is not sufficient. The process for generating ergodic trajectories using these measure-preserving flows requires time variance and is detailed in the next section.

\begin{figure}[t!]
    \centering
    \includegraphics[width=\textwidth]{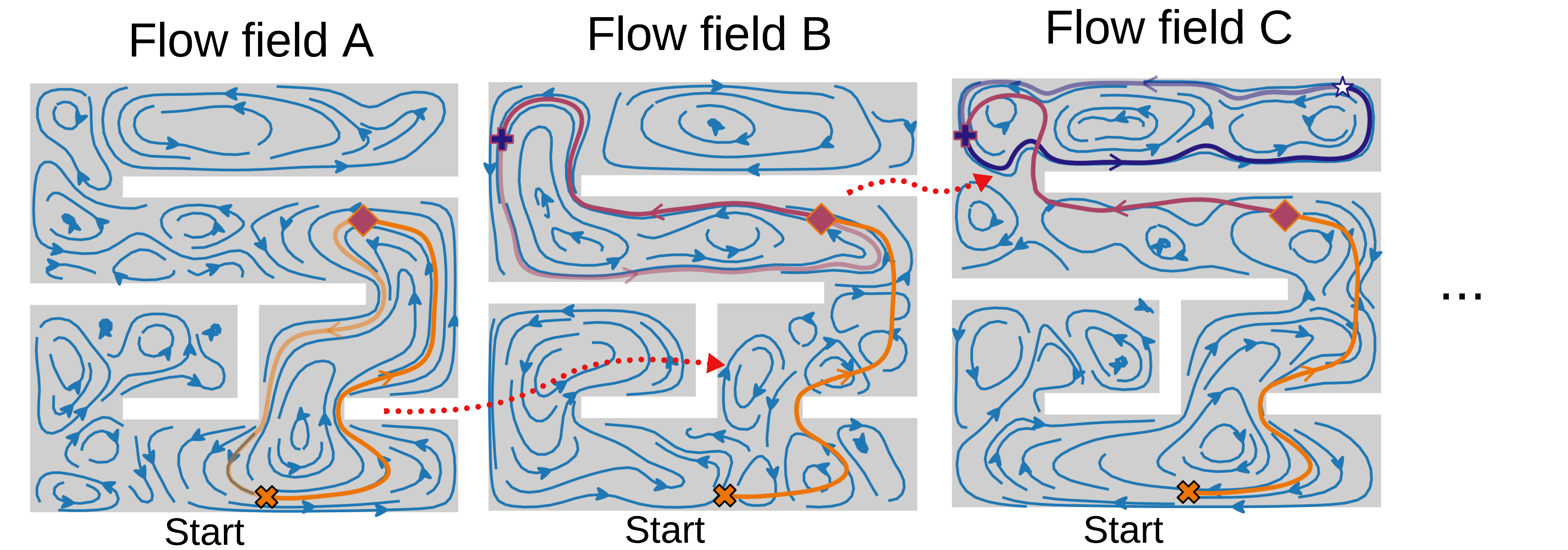}
    \caption{Example measure-preserving flows for a uniform information distribution on a maze-like space, drawn in blue. (A) Starting from an initial condition marked by the brown X, the brown line represents a  trajectory traced by following the flow field lines.
    Note that the flow lines form stationary limit cycles, as shown by the faded brown extension, so the robot will simply trace the same cycle repeatedly.
    (B) By cleverly switching the underlying flow field when the robot reaches the magenta diamond, we can escape the limit cycle and cover more space.
    (C) We can repeat this process when the robot reaches the purple plus, extending the trajectory to cover more of the map. This process can be repeated again when the robot reaches the white star ad infinitum.
    }
    \label{fig:measure-preserving-flows-example}
    \vspace{-1.5em}
\end{figure}

\subsection{Infinite-Limit Ergodicity from Random Flows}
As mentioned in the previous section, tracing a 2D time-invariant flow will always result in a stationary limit cycle. To break out from the limit cycle, we need to follow a time-varying flow. This concept is illustrated in \cref{fig:measure-preserving-flows-example}, where switching between flow fields A, B, and C at key points helps extend the trajectory to cover more of the space.

A random flow (\cref{defn:random-flow}) is a simple method to generate a time-varying flow. In particular the random flow $\mathfrak T^t$ is a Markov process, and by the Krylov-Bogolyubov theorem \cite{kryloff1937theorie} admits at least one invariant measure.

\begin{definition}[Random flow]\label{defn:random-flow}
    Let $\mathfrak T^t$ be a time-invariant distribution of flows such that if $T^t\sim \mathfrak T^t, T^s\sim \mathfrak T^s$, then $T^t(T^s(\cdot))\sim \mathfrak T^{t+s}$. $\mathfrak T^t$ is a Markov process.
    Note that the sampled flows are time-varying, but the overall stochastic process is time-invariant.

    Invariant measures are defined in terms of $\mathfrak T^t$'s push-forward measure $(\mathfrak T^t)_*$. $\mu$ is invariant for $\mathfrak T^t$ if $(\mathfrak T^t)_*\mu = \mu$.
\end{definition}

Notably, if $\mathfrak T^t$ admits only one invariant measure, we can invoke \textit{unique ergodicity} (\cref{thm:random-unique-ergodicity}) to prove that $\mathfrak T^t$ is ergodic. In other words, if there is exactly one measure $\mu$ invariant with respect to $\mathfrak T^t$, then $\mathfrak T^t$ must be ergodic with respect to $\mu$. And therefore the spatial statistics of the trajectory traced by following $T^t$ will converge to $\mu$ as $t\to\infty$.

\begin{theorem}[Unique Ergodicity] \label{thm:random-unique-ergodicity}
    Let $(X,\mathscr B,\mu)$ be a measure space and let $\mathfrak T^t$ be a distribution of random flows. If $\mu$ is the only measure stationary with respect to $\mathfrak T^t$, then we say $\mathfrak T^t$ is uniquely ergodic. And in the infinite limit, trajectories sampled from $\mathfrak T^t$ will be ergodic.
\end{theorem}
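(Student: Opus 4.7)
The plan is to adapt the classical compactness proof of unique ergodicity to the random flow setting of \cref{defn:random-flow}. Fix an arbitrary starting point $x\in X$, and for each $\tau>0$ form the empirical occupation measure
\[
    \mu_\tau^x := \frac{1}{\tau}\int_0^\tau \delta_{T^t(x)}\,dt,
\]
where $T^t$ is a realization sampled from $\mathfrak T^t$. Convergence $\mu_\tau^x \to \mu$ in the weak-$*$ topology as $\tau\to\infty$ is exactly the ergodic criterion (\ref{eq:erg-criterion}), so the goal reduces to establishing this convergence.

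First I would exploit compactness: since $X$ is compact, the space of Borel probability measures on $X$ is weak-$*$ sequentially compact by Banach--Alaoglu together with Prokhorov's tightness criterion. Hence any sequence $\tau_n\to\infty$ admits a subsequence along which $\mu_{\tau_n}^x$ converges weakly to some probability measure $\nu$. Step two is to show that every such limit $\nu$ must be $\mathfrak T^s$-invariant for all $s\geq 0$. For a continuous test function $f:X\to\mathbb R$, a change of variables yields the telescoping identity
\[
    \frac{1}{\tau}\int_0^\tau f(T^{t+s}x)\,dt - \frac{1}{\tau}\int_0^\tau f(T^t x)\,dt = \frac{1}{\tau}\int_\tau^{\tau+s} f(T^t x)\,dt - \frac{1}{\tau}\int_0^s f(T^t x)\,dt,
\]
whose right-hand side is $O(s/\tau)\to 0$ since $f$ is bounded on compact $X$. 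The left-hand side is the difference between $\int f\,d((T^s)_* \mu_\tau^x)$ and $\int f\,d\mu_\tau^x$; passing to the weak limit along the subsequence gives $\int f\,d((\mathfrak T^s)_*\nu) = \int f\,d\nu$ for every continuous $f$, so $\nu$ is $\mathfrak T$-invariant. By the uniqueness hypothesis $\nu=\mu$, and since every subsequential limit equals $\mu$, the whole family $\mu_\tau^x$ converges weakly to $\mu$.

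The main obstacle I anticipate is the stochastic bookkeeping in the invariance step: because $\mathfrak T^s$ is a distribution over flows rather than a single map, ``$(\mathfrak T^s)_*\nu$'' must be interpreted as the expected push-forward, and the Markov consistency $T^{t+s}(\cdot) = T^t(T^s(\cdot))$ built into \cref{defn:random-flow} has to be applied either pathwise (almost surely over the sampled realization) or after taking an expectation over the flow distribution. Correspondingly, the conclusion that trajectories sampled from $\mathfrak T^t$ are ergodic should be read as an almost-sure statement over the realization of $T^t$, and a fully rigorous treatment may need to invoke a Birkhoff-type ergodic theorem for Markov processes in place of the deterministic compactness sketch above. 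Everything else --- weak-$*$ compactness, the telescoping estimate, and the uniqueness deduction --- is essentially routine once the measurability setup is in place.
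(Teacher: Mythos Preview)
Your proposal is correct and is precisely the classical compactness argument for unique ergodicity; the paper itself does not give a proof but simply defers to Einsiedler's textbook \cite{einsiedler2011ergodic}, Chapter~4.3, where exactly this weak-$*$ compactness plus telescoping-invariance argument is presented. Your identification of the stochastic subtlety---that $(\mathfrak T^s)_*\nu$ must be read as the expected push-forward and that the conclusion is almost sure over realizations---is the one genuine wrinkle beyond the deterministic textbook version, and you have flagged it appropriately.
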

\begin{proof}
    See Eisiedler's textbook \cite{einsiedler2011ergodic}, Chapter 4.3. 
\end{proof}

Using the basis vector fields $\{v_1,\cdots,v_n\}$ from (\ref{eq:measure-preserving-laplacian-eigfuncs}), we can construct a distribution uniquely ergodic with respect to the given measure. With $U_1\cdots U_n$ being $n$ uniform random variables on the range $[-1, 1]$ and $u_i\sim U_i$, let $\mathfrak T^t$ be the random flow obtained by exponentiating the random vector field $V^t$. 
\begin{equation} \label{eq:random-vector-field-laplacian-eigfunc}
    \sum_{i=1}^n u_i v_i \sim V^t
\end{equation}
\begin{corollary}[Constructing a Uniquely Ergodic Random Flow] \label{coro:random-unique-erg-laplacian-eigfunc}
    Let $\mathfrak T^t$ be the random flow obtained from exponentiating (\ref{eq:random-vector-field-laplacian-eigfunc}). If the support of the desired measure $\mu$ has only one connected component, then $\mathfrak T^t$ is ergodic.
\end{corollary}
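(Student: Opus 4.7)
The plan is to reduce the claim to \cref{thm:random-unique-ergodicity}: it suffices to show that $\mu$ is the unique probability measure invariant under the Markov semigroup $\mathfrak T^t$. Invariance of $\mu$ will fall out of the construction essentially for free, while uniqueness is where the connectedness hypothesis enters.

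For invariance I would first observe that each basis field $v_i$ in (\ref{eq:measure-preserving-laplacian-eigfuncs}) satisfies $\nabla\cdot(p v_i) = -\partial_x\partial_y u_i + \partial_y\partial_x u_i = 0$ by equality of mixed partials, so \cref{thm:measure-preserving-divergence} says it is measure-preserving with respect to $\mu$. Any linear combination $\sum u_i v_i$ remains divergence-free as a $p$-weighted field, hence every realization of $V^t$ and of $\mathfrak T^t$ preserves $\mu$; consequently $(\mathfrak T^t)_*\mu = \mu$, establishing that $\mu$ is at least one invariant measure.

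For uniqueness I would argue that the Markov process $\mathfrak T^t$ is irreducible on $\operatorname{supp}(\mu)$, which forces any invariant probability measure to coincide with $\mu$. The argument rests on three facts: (i) the Dirichlet eigenfunctions $\{u_i\}$ form a complete basis of $L^2(X)$ vanishing on $\partial X$, so the vector fields $v_i$ span a dense subspace of the smooth measure-preserving fields tangent to the boundary; (ii) the random coefficients are drawn uniformly from $[-1,1]$ and therefore have full support in every finite-dimensional slice of that span; and (iii) $\operatorname{supp}(\mu)$ is connected. Combining these, for any two open subsets $U,W\subseteq\operatorname{supp}(\mu)$ I would construct a concatenation of sampled flows of positive probability that steers $U$ into $W$, ruling out the existence of a proper $\mathfrak T^t$-invariant subset of $\operatorname{supp}(\mu)$ and thereby forcing $\nu=\mu$ for any other invariant $\nu$.

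The hardest step, and the main obstacle, is converting the density and support conditions (i)–(ii) into genuine controllability along an arbitrary path in the connected support (iii): one must verify that at each interior point of $\operatorname{supp}(\mu)$ the collection $\{v_i(x)\}$ spans all directions, and that this local richness can be patched together across the support without getting pinned against $\partial X$ or a zero of $p$. Once this irreducibility is in place, \cref{thm:random-unique-ergodicity} delivers ergodicity of $\mathfrak T^t$ in the infinite-time limit.
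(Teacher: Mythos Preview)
Your proposal is sound in outline and, like the paper, reduces everything to \cref{thm:random-unique-ergodicity} after verifying that $\mu$ is invariant (which you handle the same way, via $\nabla\cdot(pv_i)=0$ and linearity). The divergence is in the uniqueness step. You argue via \emph{irreducibility} of the Markov semigroup on $\operatorname{supp}(\mu)$: density of the $\{v_i\}$ among measure-preserving fields plus full support of the random coefficients gives enough controllability to steer any open set into any other, ruling out a proper invariant subset. The paper instead argues \emph{contrapositively through supports}: any second invariant measure $\nu$ would have to be invariant under every individual $v_i$, so the boundary of $\operatorname{supp}(\nu)$ would be a common limit cycle of all the basis fields; connectedness of $\operatorname{supp}(\mu)$ then forces $\operatorname{supp}(\nu)=\operatorname{supp}(\mu)$ and hence $\nu=\mu$. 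Your route is the classical stochastic-analysis one (in the spirit of Stroock--Varadhan support / control-theoretic accessibility) and makes the role of connectedness transparent as path-connectedness of the reachable set; the paper's route is shorter and more geometric but leans on the unjustified claim that the Dirichlet-eigenfunction fields share no interior limit cycle, which is essentially your spanning condition on $\{v_i(x)\}$ in disguise. Either way the honest work sits in exactly the place you flagged as the main obstacle.
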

\begin{proof}
    Any measure invariant with $V^t$ must be invariant for every basis vector field $\{v_i\}$. By construction, $\mu$ is invariant for all the $\{v_i\}$ and invariant with $V^t$.

    On the other hand, there is no other measure $\nu$ invariant with every $\{v_i\}$, unless $\mu$ can be split (i.e. across multiple connected components). By contradiction, suppose $\nu\neq\mu$ is invariant with all $\{v_i\}$. 
    The boundary of $\nu$'s support must be a shared limit cycle of all $\{v_i\}$. 
    But if the support of $\mu$ has only one connected component, then the support of $\nu$ would have to match the support of $\mu$ and thus $\nu=\mu$, leading to a contradiction.

    Thus $\mu$ is the only invariant measure, and $\mathfrak T^t$ is ergodic by \cref{thm:random-unique-ergodicity}. 
    \hfill $\qed$
\end{proof}

\subsection{Ergodic metric on arbitrary 2D metric space}
\begin{figure}[t!]
    \centering
    \includegraphics[width=.9\textwidth]{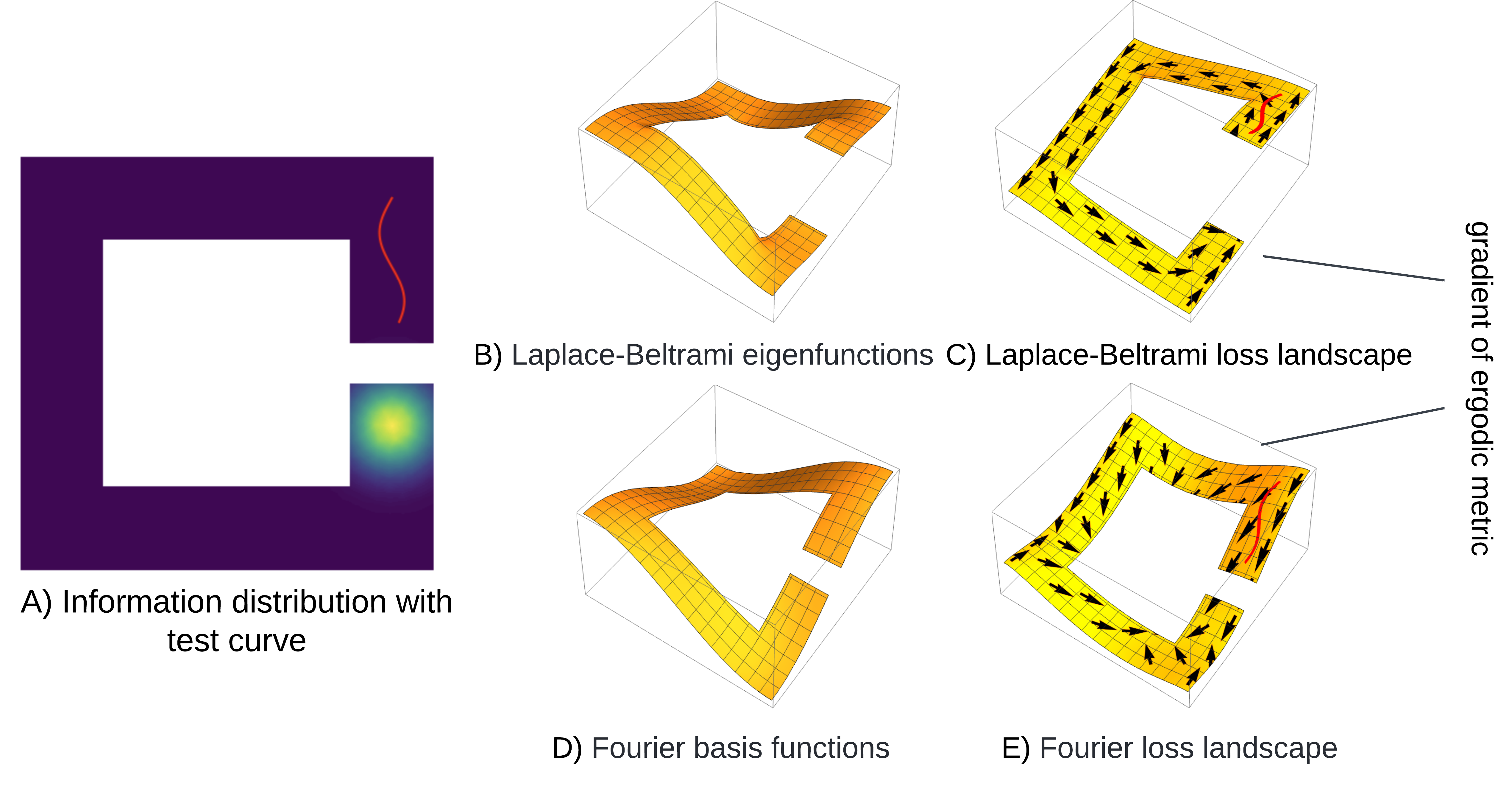}
    \caption{Effect of modifying the ergodic metric to use the Laplace-Beltrami eigenfunctions. (A) A C-shaped map with an information peak (yellow) and test curve (red) are separated by a gap.
    Both basis function choices depend only on the geometry of the obstacle map and can be compared in figures B and D. The Laplace-Beltrami eigenfunctions (B) have the correct sense of continuity influenced by the boundaries of the free space, while the ordinary Fourier basis functions (D) try to maintain continuity within the gap.
    As a result, the Laplace-Beltrami loss landscape (C) smoothly pushes the test curve around the central obstacle, while the Fourier loss landscape (E) gets trapped in a local minima.
    }
    \label{fig:laplace-beltrami-metric}
    \vspace{-1.5em}
\end{figure}

According to \cref{coro:random-unique-erg-laplacian-eigfunc}, randomly sampling a trajectory from $\mathfrak T^t$ as time $t\to\infty$ guarantees an ergodic trajectory, covering our space according to the given information distribution. However, there is no mention of how quickly the convergence happens or how well the trajectory covers the space for a finite time horizon.

Thus we turn to the ergodic metric \cite{mathew2011metrics} used in many other ergodic search formulations to maximize coverage in a finite time horizon. However, rather than the standard Fourier basis functions for a square map $\phi_{ij}=\cos(k_i\pi x)\cos(k_j\pi y)$, we must instead find the spatial harmonic basis functions for our arbitrary metric space $X$ using the eigenfunctions of the Laplace-Beltrami operator \cite{graichen2015sphara} (\ref{eq:laplace-beltrami}).
\begin{equation} \label{eq:laplace-beltrami}
    \nabla^2 \phi_k = \lambda_k\phi_k
\end{equation}

Because our space is digitally represented as a triangular mesh, we use the Finite Element Method (FEM) to compute mesh basis functions on the graph Laplace-Beltrami operator \cite{vallet2008spectral,dyer2007investigation,zhang2007spectral}. This yields the basis functions $\phi_k$, with integrals approximated as the inner product $\int_X fg dx \approx f^\top M g = \langle f,g\rangle_M$. The $k$th spectral coefficient of the information distribution $p(x)$ is $\hat\xi_k = \langle p, \phi_k\rangle_M$
and the $k$th spectral coefficient of the agent trajectory is given as
\begin{equation}
    \xi_k = \frac{1}{T}\sum_{t=1}^T \langle \phi_k, \delta_{x_t} \rangle_M= \frac{1}{T}\sum_{t=1}^T \phi_k(x_t)
\end{equation}

The ergodic metric itself is the squared error between the information distribution and trajectory's spectral coefficients, weighted to place more importance on the low-frequency coefficients (\ref{eq:ergodic-metric}).
The weighting formula $(1+\sqrt{\lambda_k})^{-2}$ is chosen to ensure consistent behavior on an obstacle-free square map when compared to prior literature. We will henceforth refer to our modified ergodic metric as the Laplace-Beltrami ergodic metric to distinguish it from the previous Fourier ergodic metric.
\begin{gather} \label{eq:ergodic-metric}
    \mathcal E = \sum_{k=0}^\infty \left(1+\sqrt{\lambda_k}\right)^{-2} \left(\xi_k - \hat\xi_k\right)^2
\end{gather}

\subsection{Ergodic metric minimization for finite time horizon}
To minimize the ergodic metric (\ref{eq:ergodic-metric}) on a finite time horizon, we formulate the problem as a minimization problem over the state $x_a^t$ for every agent at every time and vector field coefficients $u_i(t)$. The first constraint comes from forcing the trajectory to follow the measure-preserving flows (\ref{eq:random-vector-field-laplacian-eigfunc}), and the second constraint enforces a maximum velocity.

\begin{equation} \label{eq:erg-minimization-problem}
\begin{aligned}
\min_{x_a^t,u_i(t)} \quad & \mathcal E(x_a^t) \\
\textrm{s.t.} \quad & \dot{x}_a^t = \sum_i u_i(t) v_i(x_a^t) \quad&\forall ~t\in [0,T] \\
& \| \dot{x}_a^t \| \leq v_\text{max} &\forall ~t\in [0,T]
\end{aligned}
\end{equation}

Picking some time discretization $\Delta t$ and writing $x^{t+1}_a = f(x^t_a,u_i(t))$ as the flow function for following $\dot x^t_a$ for one timestep, we can rewrite the constraints of (\ref{eq:erg-minimization-problem}) as $x_a^{t+1} = f(x_a^t, u_i(t))$ and $\|x_a^{t+1} - x_a^{t}\|\leq v_\text{max}\Delta t$.

We use gradient descent to solve this problem, forcing the $\Delta x$ and $\Delta u$ updates to respect the linearized constraints at every iteration.
$\eta$ is gradient step size and $\kappa$ is a penalty term for the change in vector field.

\begin{equation} \label{opt:grad-calc}
\begin{aligned}
\min_{\Delta x,\Delta u} \quad & \|\eta\nabla_x \mathcal E - \Delta x\|^2 + \kappa \|\Delta u\|^2 \\
\textrm{s.t.} \quad & x_a^{t+1} + \Delta x_a^{t+1} = f(x_a^t, u_i(t)) + \partial_x f(\cdot) \Delta x_a^t + \partial_u f(\cdot) \Delta u  \\
  &\|x_a^{t+1} + \Delta x_a^{t+1} - x_a^{t} - \Delta x_a^{t}\| \leq v_\text{max}\Delta t    \\
\end{aligned}
\end{equation}

With the linear approximation of the flow constraints, we notice that (\ref{opt:grad-calc}) is a second-order cone problem (SOCP), and we minimize ergodicity iteratively by running a SOCP solver to find the $\Delta x$ and $\Delta u_i$ updates.

\section{Results and Discussion} \label{sec:results}
\begin{wrapfigure}{r}{0.5\columnwidth}
    \centering
    \vspace{-5em}
    \includegraphics[width=\linewidth]{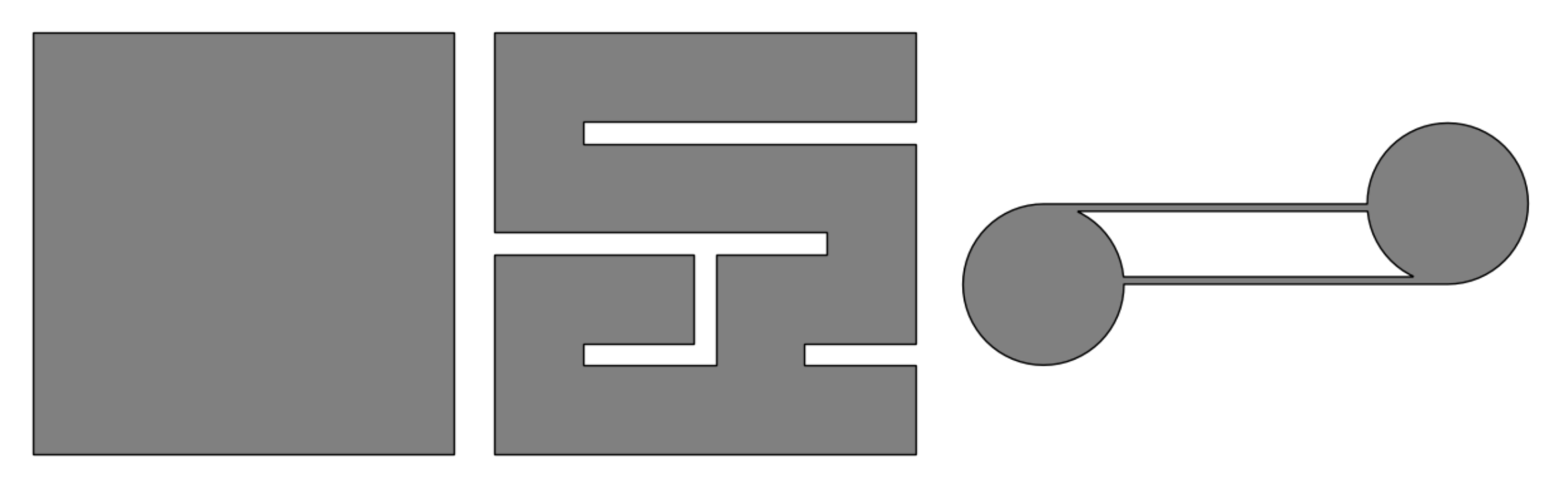}
    \caption{Obstacle maps used for testing. From left to right, we have the obstacle-free map, the Maze map, and the Rooms map.}
    \label{fig:maps}
    \vspace{-1.5em}
\end{wrapfigure}
In order to evaluate the efficacy of our approach in complex environments, we present results from experiments on three maps (see \cref{fig:maps}) with uniform and nonuniform information distributions. We also compare the performance of various algorithms in single and multi-agent settings. In particular, we compare our results against STOEC \cite{ayvali2017ergodic} and the control barrier function method \cite{lerch2023safety}, observing at least an order of magnitude improvement in performance as the obstacles in the environment get more complex. 
Note that the experiments are run with a max velocity of 1 unit/s, with the single-agent case running for a duration of $t=5$s. The multi-agent comparison cases use a total of 7 agents, with each agent having a trajectory of $t=1$s.



\subsection{Obstacle-free Map Experiment Results}
The obstacle-free map is used as a baseline to ensure that our method matches the performance of control barrier function-based implementation \cite{lerch2023safety}. 
The experiment results for the single-agent uniform (Left), single-agent nonuniform (Center), and multi-agent nonuniform (Right) cases are shown in \cref{fig:square-exp}. 
The final ergodic metric scores for the four experiments, i.e., uniform or non-uniform (gaussian) coverage using single or multiple agents, are reported in \cref{tab:square-results}, with \textit{F} being the Fourier ergodic metric and \textit{LB} being our modified Laplace-Beltrami ergodic metric. We observe that our method performs comparably to the baseline. Note that for the square map, the Fourier and Laplace-Beltrami ergodic metrics are identical. The numerical difference in the table is solely due to approximation error.
\begin{figure}[h!]
    \centering
    \vspace{-2em}
    \includegraphics[width=.32\textwidth]{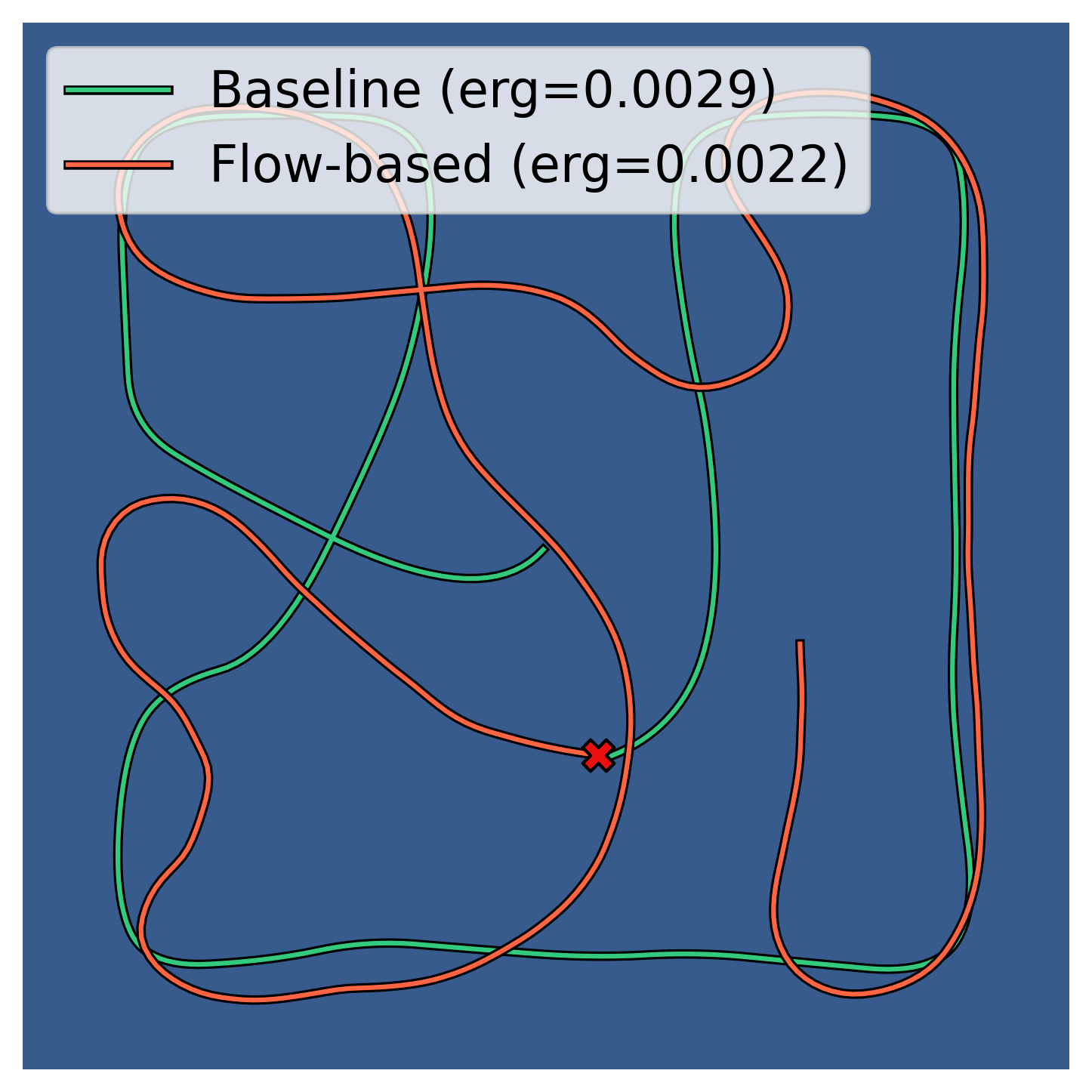}
    \includegraphics[width=.32\textwidth]{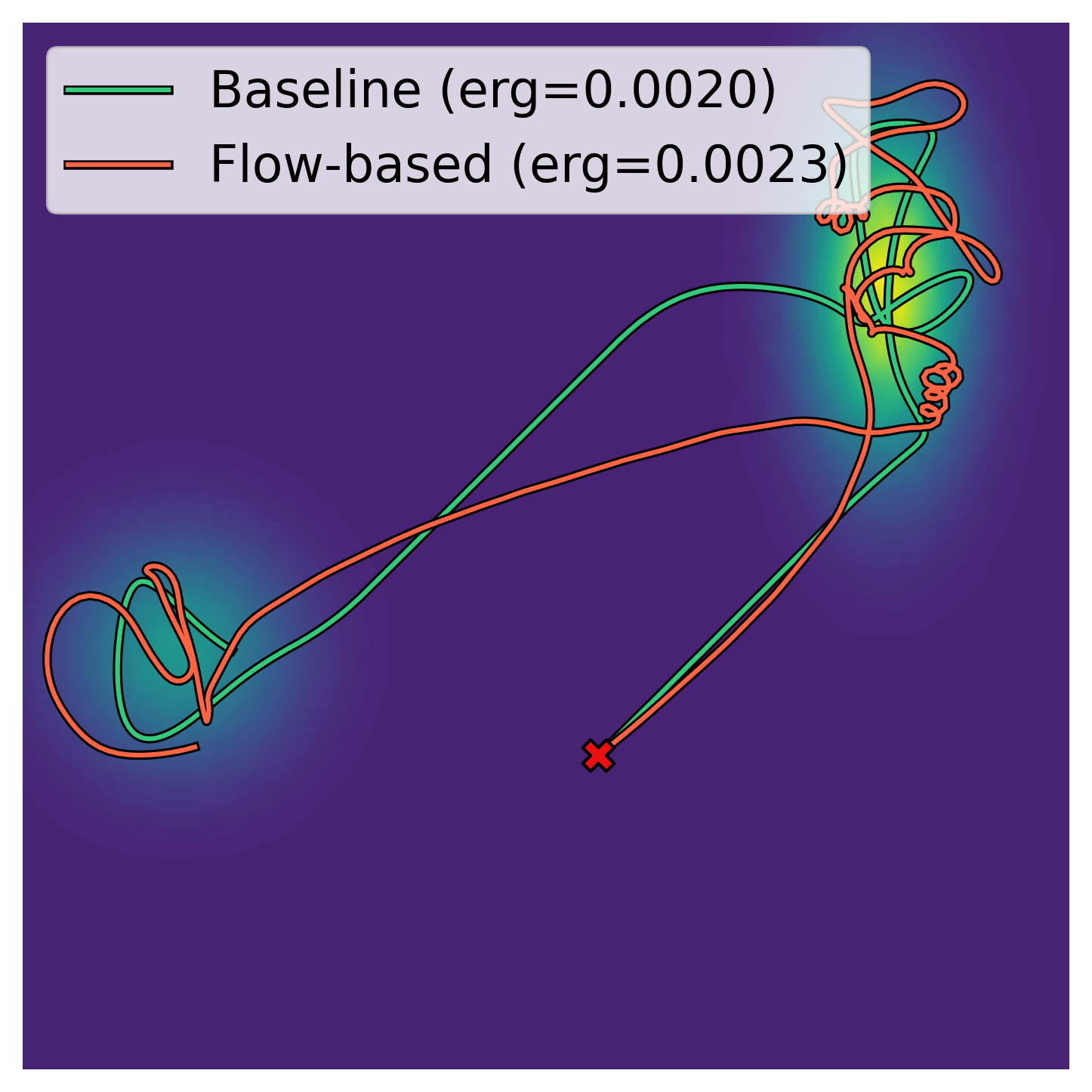}
    \includegraphics[width=.32\textwidth]{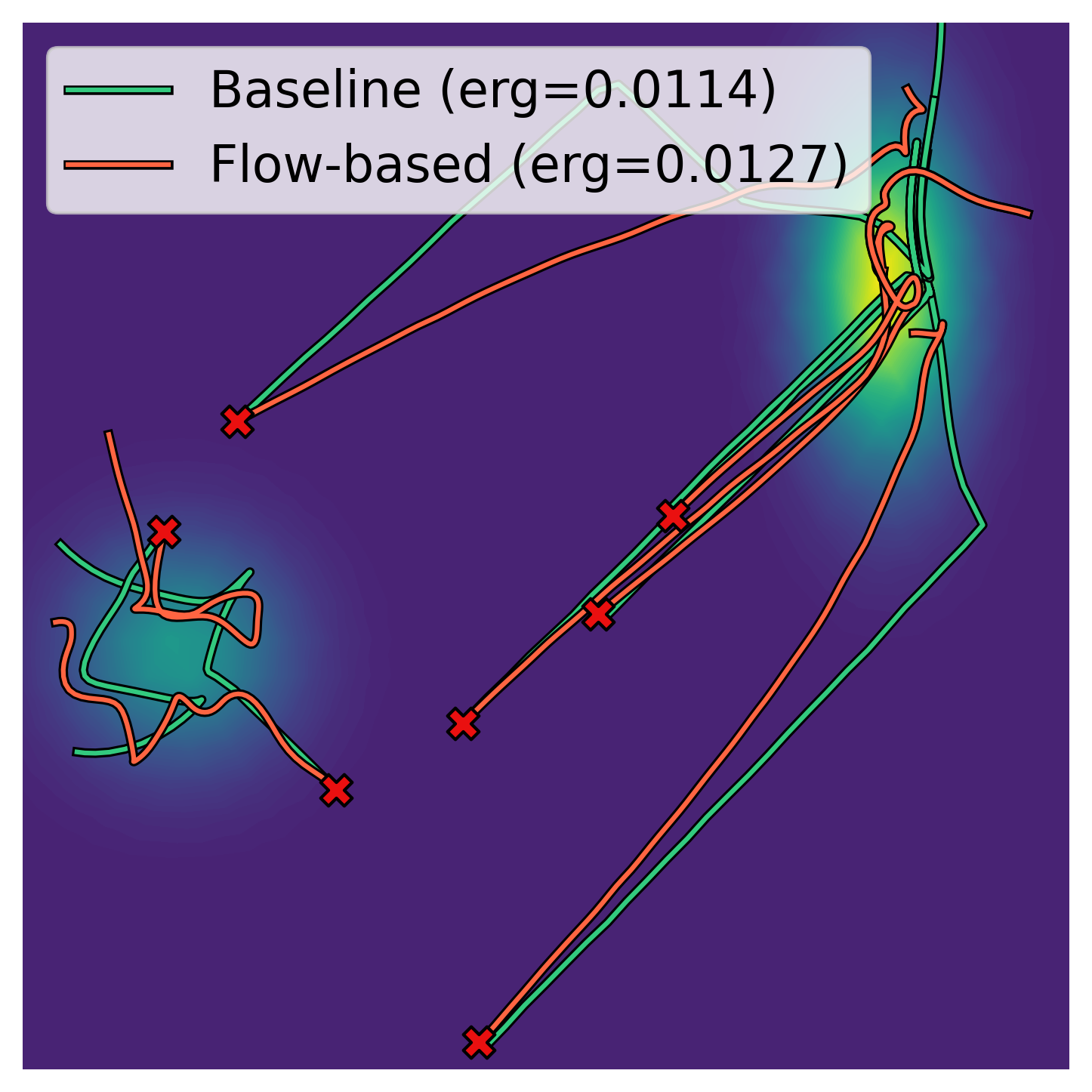}
    \caption{Results for obstacle-free square map comparing our and the baseline methods. From left to right, we have the trajectories generated for the uniform, nonuniform, and multi-agent nonuniform test cases. The ergodic metric value for each trajectory is shown in the figure legend.
    }
    \label{fig:square-exp}
    \vspace{-3em}
\end{figure}

\begin{table}[h!]
\centering
\vspace{-1em}
\caption{
The Fourier (F) and Laplace-Beltrami (LB) ergodic metric values for the trajectory produced by each method on the obstacle-free square map. The ergodicity values between these approaches are comparable in the absence of obstacles. 
}
\label{tab:square-results}
\begin{tabular}{|c|cccc|cccc|}
\hline
                  & \multicolumn{4}{c|}{Single Agent}                            & \multicolumn{4}{c|}{Multi-Agent}                             \\
Obstacle-free map & \multicolumn{2}{c}{Uniform} & \multicolumn{2}{c|}{Gaussians} & \multicolumn{2}{c}{Uniform} & \multicolumn{2}{c|}{Gaussians} \\
                  & F  & LB                     & F             & LB             & F  & LB                     & F             & LB             \\ \hline
Baseline \cite{lerch2023safety} & 0.0028 & \multicolumn{1}{c|}{0.0029} & 0.0019 & 0.0020 & 0.0006 & \multicolumn{1}{c|}{0.0005}  & 0.0118 & 0.0114 \\
Flow-based (ours) & 0.0022 & \multicolumn{1}{c|}{0.0022} & 0.0025 & 0.0023 & 0.0004 & \multicolumn{1}{c|}{0.0004}  & 0.0131 & 0.0127 \\ \hline
\end{tabular}
\vspace{-3em}
\end{table}

\subsection{Maze Map Experiment Results}
The maze map poses a path-finding problem in addition to the search problem, and the challenge lies in the large distance between points on opposite sides of a wall and the back-and-forth snaking required to navigate the maze. The trajectories computed from various approaches are illustrated in \cref{fig:maze-exp} and their respective ergodicity values are summarized in \cref{tab:maze-results}. The maze map results demonstrate that our flow-based method is not only able to avoid the obstacles, but can also navigate around large obstacles. 
We observe that although STOEC\cite{ayvali2017ergodic} also successfully avoids wall collisions, it gets trapped in some corner of the maze, unable to explore the entire map. The control barrier function approach\cite{lerch2023safety}, on the other hand, hugs the walls very closely and occasionally navigates through the wall leading to infeasible trajectories.
In summary, we observe that our method is able to better explore the maze in all cases, evidenced by the lower ergodic metric value.

\begin{figure}[t!]
    \centering
    \vspace{-1.5em}
    \includegraphics[width=.29\textwidth]{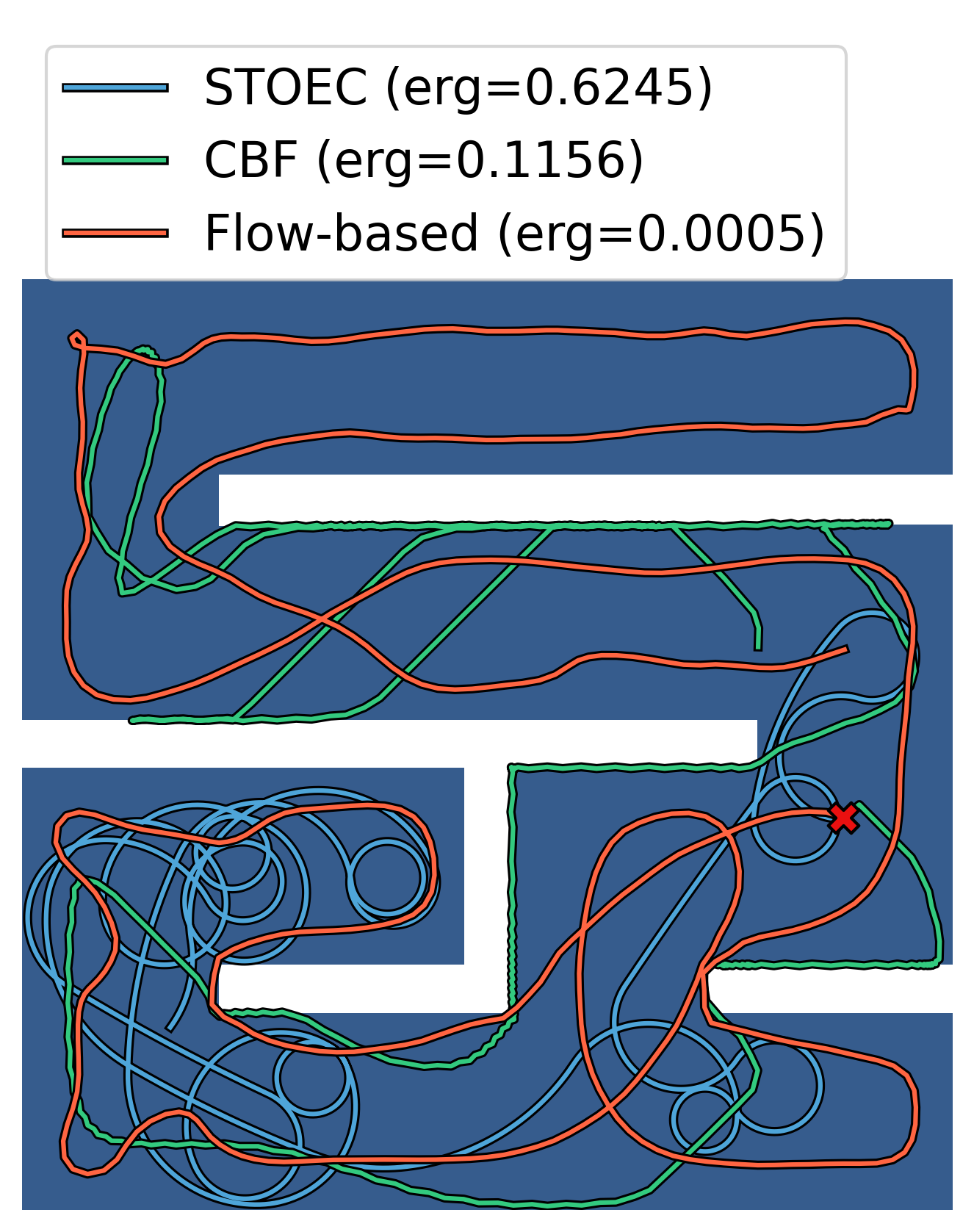}
    \includegraphics[width=.29\textwidth]{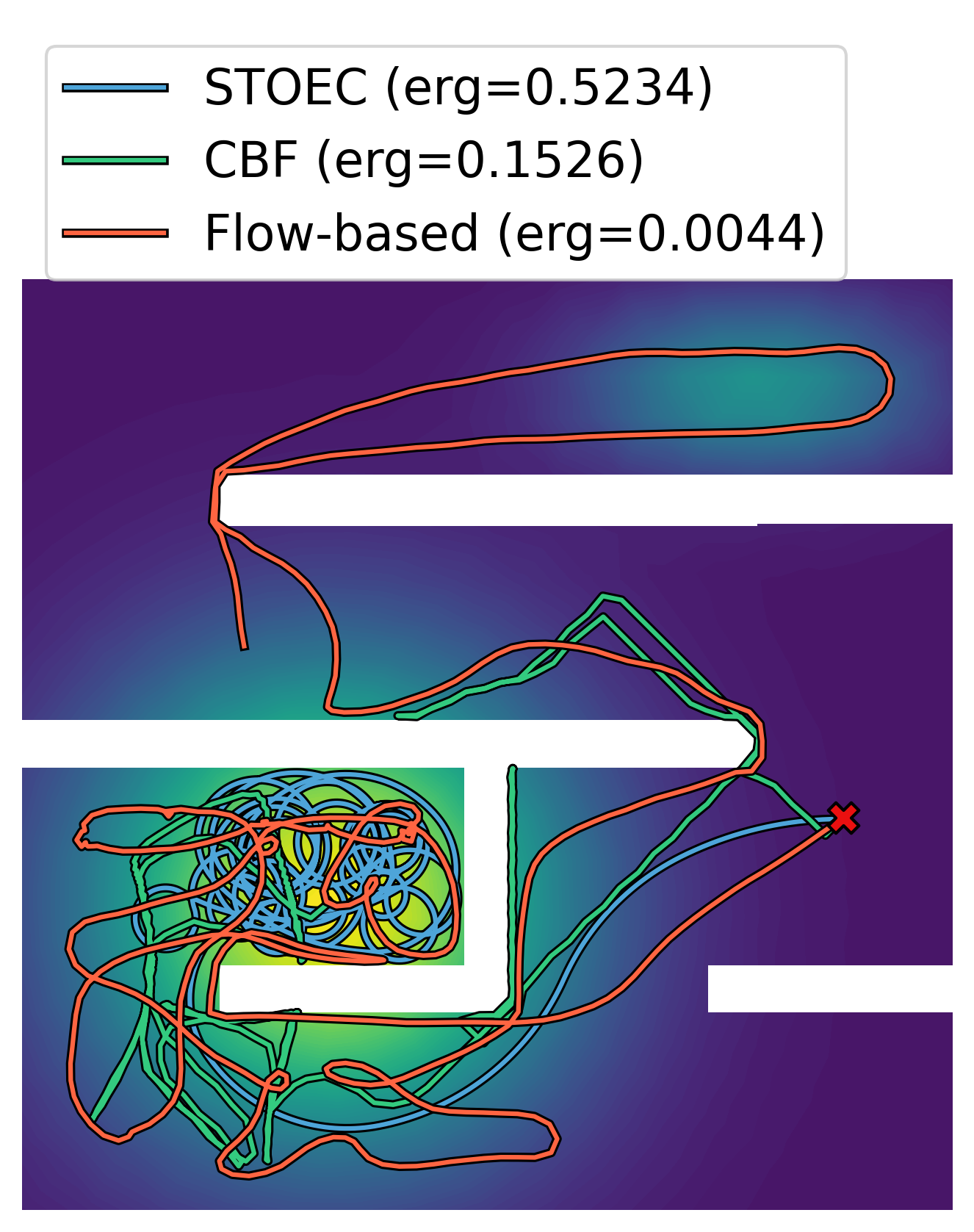}
    \includegraphics[width=.29\textwidth]{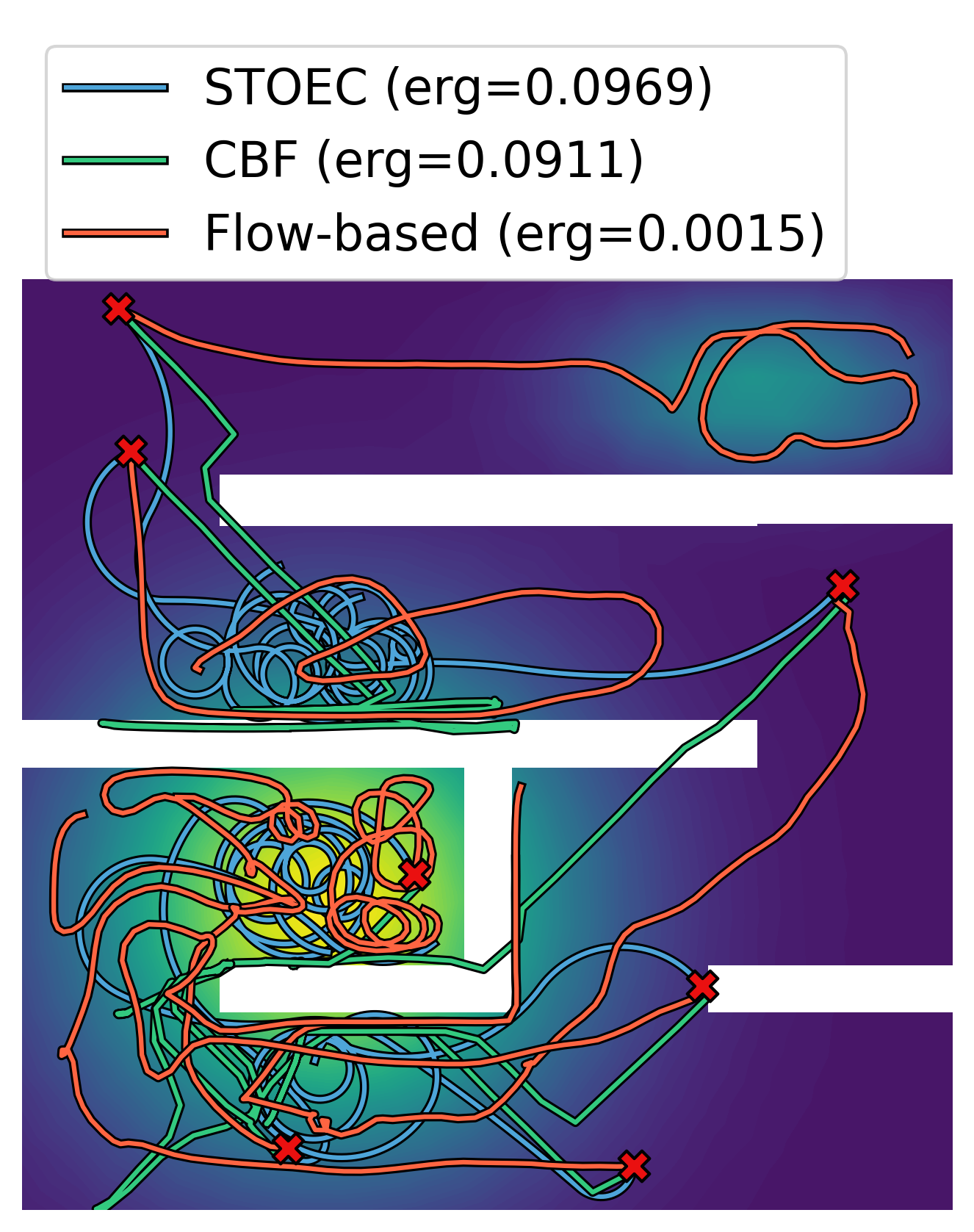}
    \caption{Results for maze map comparing our and prior methods. 
    From left to right, we have the trajectories generated for the uniform, nonuniform, and multi-agent nonuniform test cases.
    The ergodic metric value for each trajectory is shown in the figure legend, calculated using the Laplace-Beltrami ergodic metric.}
    \label{fig:maze-exp}
    \vspace{-1.5em}
\end{figure}

\begin{table}[h!]
\centering
\vspace{-1.5em}
\caption{
The Fourier (F) and Laplace-Beltrami (LB) ergodic metric values for the trajectory produced by each method on the maze map. When compared with two prior works, our approach shows at least an order of magnitude improvement in the ergodicity values.
}
\label{tab:maze-results}
\begin{tabular}{|c|cccc|cccc|} \hline
    & \multicolumn{4}{c|}{Single Agent} & \multicolumn{4}{c|}{Multi-Agent} \\
    Maze map& \multicolumn{2}{c}{Uniform} & \multicolumn{2}{c|}{Gaussians} & \multicolumn{2}{c}{Uniform} & \multicolumn{2}{c|}{Gaussians}  \\
    & F&LB & F&LB & F&LB & F&LB \\\hline
    STOEC \cite{ayvali2017ergodic} & 0.0658 & \multicolumn{1}{c|}{0.6245} & 0.0480 & 0.5234 & 0.0095 & \multicolumn{1}{c|}{0.0552}  & 0.0194 & 0.0976 \\
    Control barrier \cite{lerch2023safety} & 0.0067 & \multicolumn{1}{c|}{0.0710} & 0.0392 & 0.1610 & 0.0011 & \multicolumn{1}{c|}{0.0160}  & \multicolumn{2}{c|}{{\color[HTML]{FF0000} Collision}} \\
    \rowcolor[HTML]{9AFF99}  Flow-based (ours) & 0.0004 & \multicolumn{1}{c|}{0.0005} & 0.0022 & 0.0044 & 0.0001 & \multicolumn{1}{c|}{0.0001}  & 0.0007 & 0.0015 \\
    \hline
\end{tabular}
\vspace{-3em}
\end{table}

\subsection{Rooms Map Experiment Results}
Lastly, the rooms map connects two circular rooms with two narrow corridors. Robots must move in a straight, single-file line through the corridors to avoid collision.
The trajectories computed from various approaches are illustrated in \cref{fig:rooms-exp} and their respective ergodicity values are summarized in \cref{tab:rooms-results}. The rooms map results show that our method is able to traverse the long corridors and explore the full area.
We observe that STOEC\cite{ayvali2017ergodic} is unable to cross from the left room to the right, as the corridors are too narrow. And while the control barrier approach\cite{lerch2023safety} is able to cross over, it often gets stuck with parts of the trajectory in forbidden regions. In summary, our method not only explores both rooms successfully in all the test cases but also minimizes the ergodic metric, as evidenced by the lower ergodicity values.

\begin{figure}[t!]
    \centering
    \vspace{-1.5em}
    \includegraphics[width=.32\textwidth]{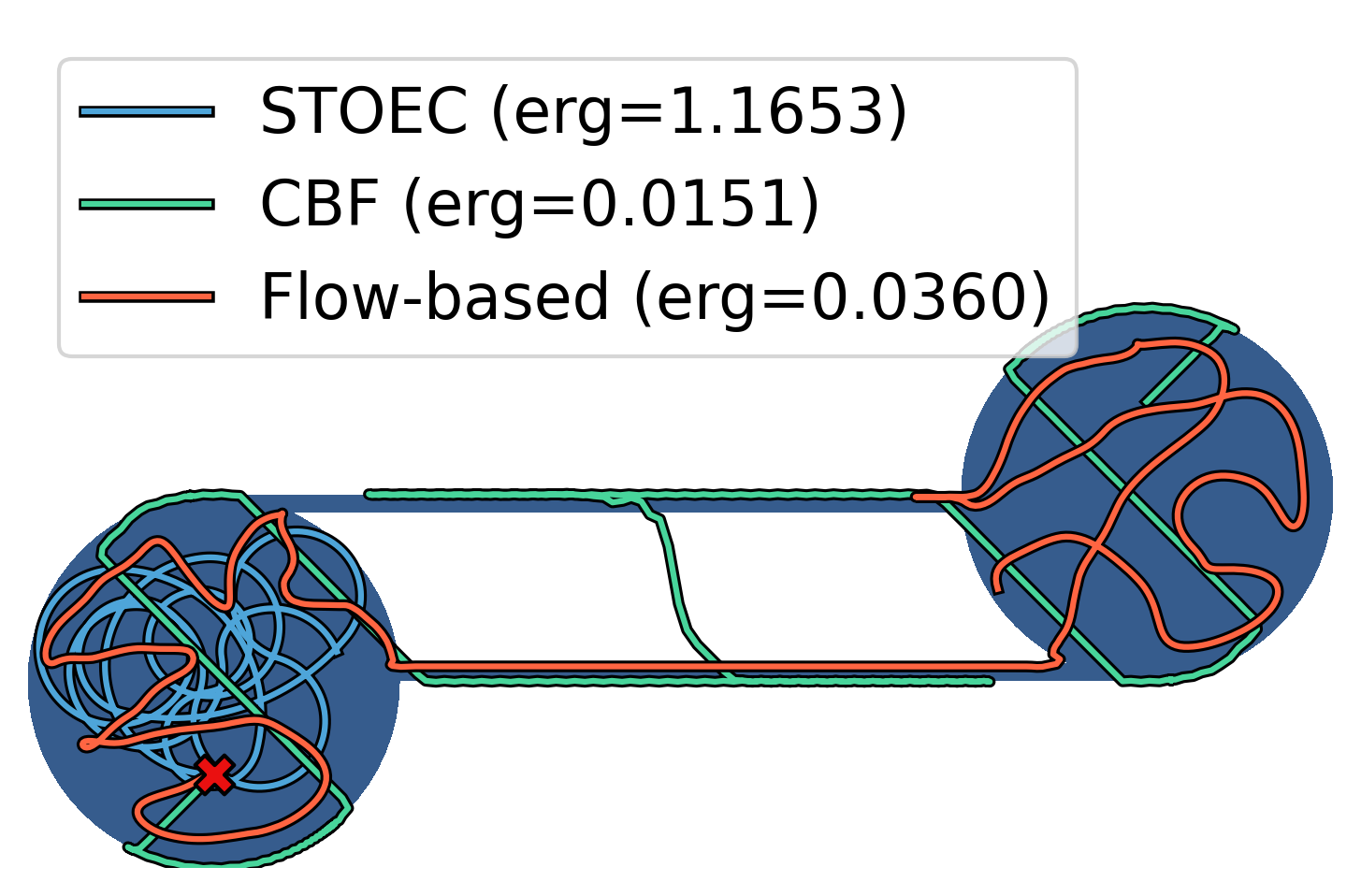}
    \includegraphics[width=.32\textwidth]{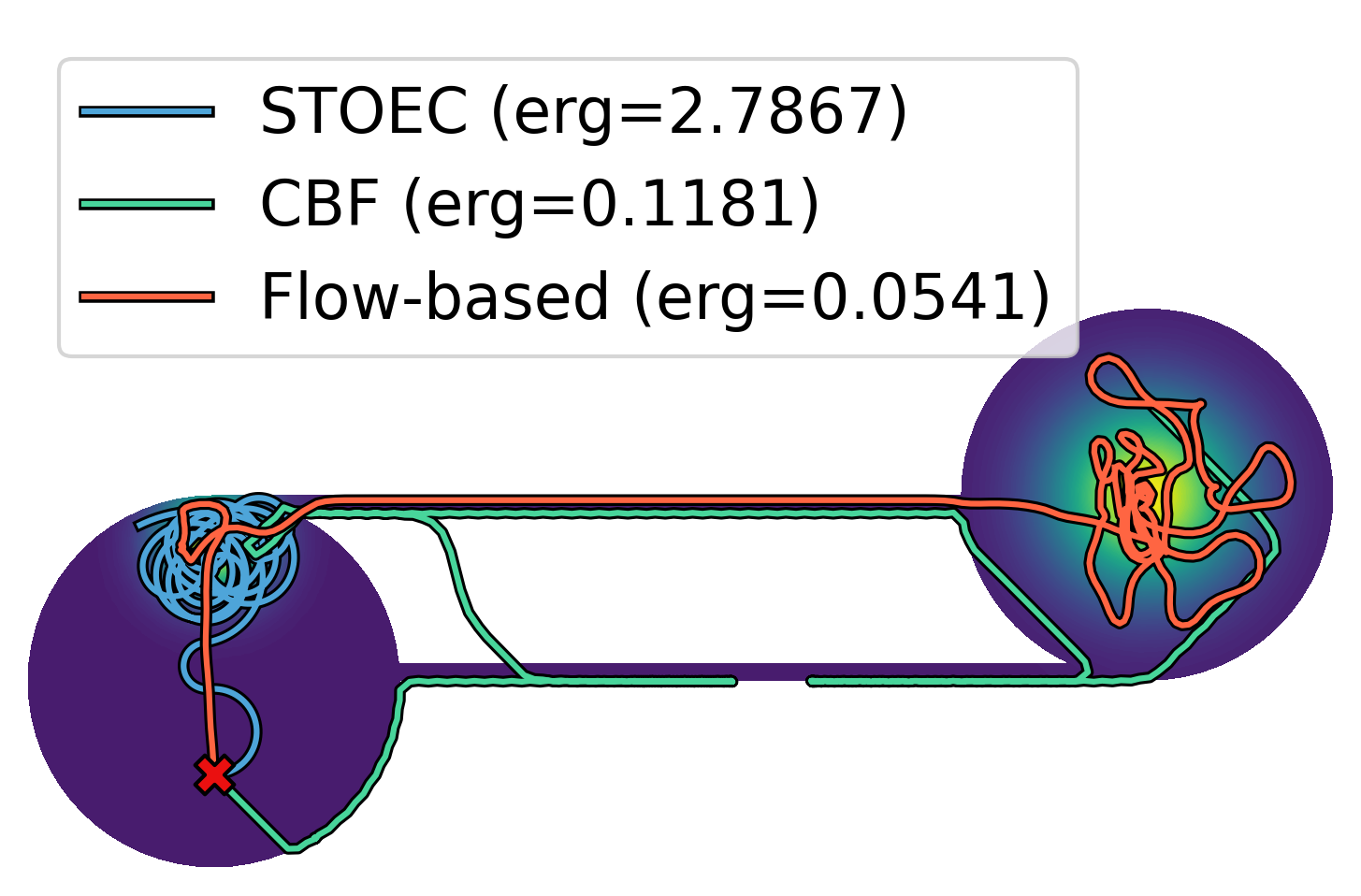}
    \includegraphics[width=.32\textwidth]{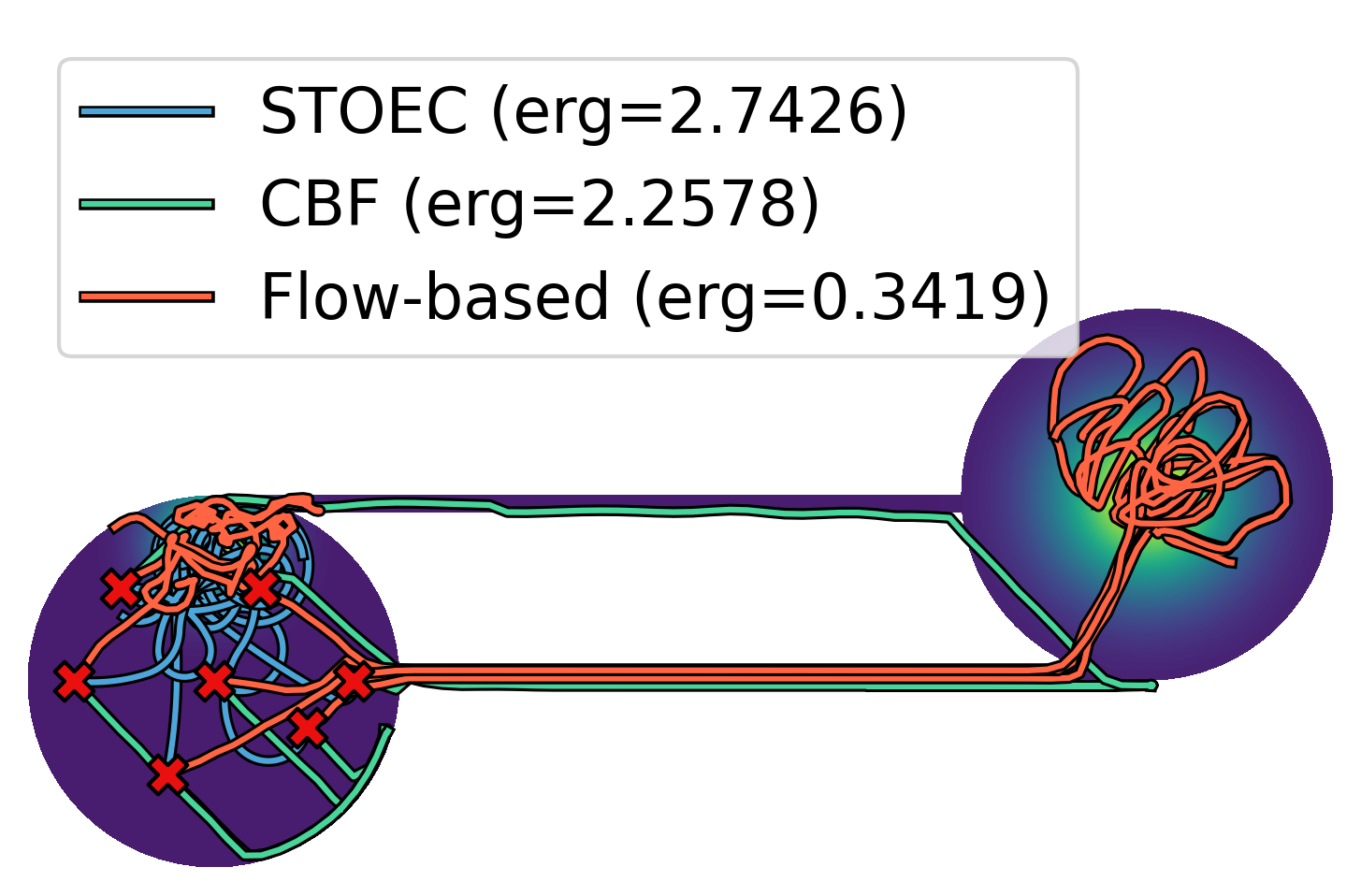}
    \caption{Results for the rooms map comparing our and prior approaches.
    From left to right, we have the trajectories generated for the uniform, nonuniform, and multi-agent nonuniform test cases.
    The ergodic metric value for each trajectory is shown in the figure legend, calculated using the Laplace-Beltrami ergodic metric.}
    \label{fig:rooms-exp}
    \vspace{-1.5em}
\end{figure}

\begin{table}[h!]
\centering
\vspace{-1.5em}
\caption{
The Fourier (F) and Laplace-Beltrami (LB) ergodic metric values for the trajectory produced by each method on the rooms map. 
When compared with two prior works, our approach shows at least an order of magnitude improvement in the ergodicity values.
}
\label{tab:rooms-results}
    \vspace{-.5em}
\begin{tabular}{|c|cccc|cccc|} \hline
    & \multicolumn{4}{c|}{Single Agent} & \multicolumn{4}{c|}{Multi-Agent} \\
    Rooms map& \multicolumn{2}{c}{Uniform} & \multicolumn{2}{c|}{Gaussian} & \multicolumn{2}{c}{Uniform} & \multicolumn{2}{c|}{Gaussian}  \\
    & F&LB & F&LB & F&LB & F&LB \\\hline
    STOEC \cite{ayvali2017ergodic} & 0.1805 & \multicolumn{1}{c|}{1.1653} & 0.4869 & 2.7867 & 0.1465 & \multicolumn{1}{c|}{1.1422}  & 0.4507 & 2.7426 \\
    Control barrier \cite{lerch2023safety} & \multicolumn{2}{c|}{{\color[HTML]{FF0000} Collision}} & \multicolumn{2}{c|}{{\color[HTML]{FF0000} Collision}} & 0.0255 & \multicolumn{1}{c|}{0.2786}  & 0.2958 & 2.2578 \\
    \rowcolor[HTML]{9AFF99} Flow-based (ours) & 0.0024 & \multicolumn{1}{c|}{0.0358} & 0.0189 & 0.0541 & 0.0116 & \multicolumn{1}{c|}{0.1140}  & 0.0441 & 0.3419 \\
    \hline
\end{tabular}
    \vspace{-3.5em}
\end{table}

\section{Conclusions and Future Work} \label{sec:conclusion}
In this work, we introduce the use of measure-preserving flows to construct feasible ergodic trajectories for point-size robots in a convoluted environment. We support this approach with mathematical proofs and show that a randomly time-varying combination of the underlying measure-preserving flows is guaranteed to yield an ergodic trajectory in the infinite time limit. Further, we present the Laplace-Beltrami ergodic metric to handle complex environments. Simulated results with single and multi-agent systems on maps that represent interior hallways and corridors illustrate the robustness of our approach. Empirical results indicate an order of magnitude improvement in ergodicity values and show success in scenarios where other approaches fail. Future directions for this work will focus on preventing collisions between finite-sized robots. Additionally, it is worth incorporating robot dynamics and physical parameters in the generation of these vector fields to help deploy generated trajectories on physical robotic systems and confirm the theoretical guarantees in the presence of physical constraints. 

%
%
\bibliographystyle{unsrt}
\bibliography{references}

\pagebreak
\appendix
\section{Proof of \cref{thm:measure-preserving-divergence}} \label{app:pf-thm1}
    \textbf{Theorem: }
    Let $(X,\mathscr B,\mu)$ be a measure space (we assume $X\subseteq \mathbb R^n$). Let $p(x)$ be the probability density function associated with $\mu$ such that $\mu(A)=\int_A p(x)dx$.

    The set of all measure-preserving flows on this measure space can be described through their associated vector fields $T = \exp(\vec v)$. Furthermore, these vector fields form a linear subspace described by the following partial differential equation.
    \begin{equation}
      \nabla\cdot (p(x) \vec{v}(x)) = 0
    \end{equation}

\begin{proof}
    Consider a flow $T^{dt}(x)$ applied for an infinitesimal duration $dt\to 0$. As the associated vector field $v(x)$ is essentially the time-derivative of the flow, we can write as $dt\to 0$,
    \begin{equation}
        T^{dt}(x) \approx x + v(x) dt.
    \end{equation}

    Second, recall the criterion for measure-preserving: that $\mu(A) = \mu(T^{-dt}(A))$ for all subsets $A\subseteq X$ and all $dt$.
    To prove this, we only need to show this equality for $\delta$-balls $A=B_{\delta}(x)$ around every point $x\in X$. In integral form, the measure-preserving criterion is written
    \begin{align} \label{eq:measure-preserve-integral}
        \int_{B_\delta(x)} p(x)dx &= \int_{T^{-dt}B_\delta(x)} p(x) dx.
    \end{align}

    To first order in $p(x)$ as $\delta\to0$, the left hand side of (\ref{eq:measure-preserve-integral}) is $\int_{B_\delta(x)} p(x)dx = p(x)\text{Vol}(B_\delta)$. As the delta ball gets smaller, the integral just evaluates the density function $p(x)$ at the test point $x$ weighted by the volume of the delta ball.

    The right hand side of (\ref{eq:measure-preserve-integral}) applies the infinitesimal flow $T^{dt}$ to the delta ball. Making a first order approximation, we assume that the flow $T^{-dt}(x')\approx x' + v(x')dt$ is linear within a small neighborhood of the test point $x$.
    \begin{align} \label{eq:Tx-linear-approx}
        T^{-dt}(x') &\approx [\nabla v(x) x - v(x)]dt + \left(I - \nabla v(x)dt \right) x'
    \end{align}

    Using (\ref{eq:Tx-linear-approx}) the translation of the center of $B_\delta(x)$ is
    \begin{align} \label{eq:translation-bdelta}
        T^{-dt}(x) = x - v(x)dt
    \end{align}

    And the change in ball volume under a linear transform is:
    \begin{equation} \label{eq:vol-bdelta}
        \text{Vol}(T^{-dt}B_\delta) = \text{Vol}(B_\delta) \text{Det}(I - \nabla v(x)dt)
    \end{equation}

    Combining (\ref{eq:translation-bdelta}) and (\ref{eq:vol-bdelta}) followed by a first-order approximation near $dt\to0$, the right hand side of (\ref{eq:measure-preserve-integral}) can be evaluated.
    \begin{align}
        \int_{T^{-dt}B_\delta(x)} p(x) dx &\approx p(x-v(x)dt) \textrm{Vol}(B_\delta)\text{Det}(I - \nabla v(x)dt) \\
        &\approx p(x)\textrm{Vol}(B_\delta)  - \nabla\cdot(p(x) v(x)) \textrm{Vol}(B_\delta)dt \label{eq:thm1-rhs-simplified}
    \end{align}

    Comparing the reduced left- and right-hand sides of (\ref{eq:measure-preserve-integral}), we can see they differ by $\nabla\cdot(p(x) v(x)) \textrm{Vol}(B_\delta)dt$. Since $\textrm{Vol}(B_\delta)$ and $dt$ are both nonzero, the equality holds if and only if $\nabla\cdot(p(x) v(x)) = 0$ for all $x\in X$. Further, this is a linear homogenous partial differential equation, so the set of solutions forms a linear subspace. \hfill $\qed$
\end{proof}

\end{document}